\documentclass[10pt,conference]{IEEEtran}

\ifCLASSINFOpdf
  \usepackage[pdftex]{graphicx}
  \graphicspath{{../}}
\else
\fi

\usepackage{xcolor}

\usepackage{amsmath,amssymb,amsthm,mathtools}
\usepackage{algorithm}
\usepackage{algpseudocode}
\usepackage{enumitem}
\usepackage{booktabs}
\usepackage{bbm}

\usepackage{subcaption} 

\newtheorem{theorem}{Theorem}
\newtheorem{remark}{Remark}
\newtheorem{assumption}{Assumption}
\newtheorem{lemma}{Lemma}
\newtheorem{corollary}{Corollary}

\usepackage{scalerel}

\AtBeginDocument{%
  \addtolength{\topmargin}{0.025in}%
  \addtolength{\textheight}{-0.050in}%
}

\allowdisplaybreaks

\DeclareMathOperator*{\argmin}{arg\;min}

\newcommand{\bseq}{\begin{subequations}}
	\newcommand{\eseq}{\end{subequations}}
\newcommand{\baln}{\begin{align}}
	\newcommand{\ealn}{\end{align}}
\newcommand{\balnd}{\begin{aligned}}
	\newcommand{\ealnd}{\end{aligned}}
\newcommand{\beq}{\begin{equation}}
	\newcommand{\eeq}{\end{equation}}
\newcommand{\beqn}{\begin{eqnarray}}
	\newcommand{\eeqn}{\end{eqnarray}}
\newcommand{\beqno}{\begin{eqnarray*}}
	\newcommand{\eeqno}{\end{eqnarray*}}
\newcommand{\bma}{\begin{displaymath}}
	\newcommand{\ema}{\end{displaymath}}
\newcommand{\bnu}{\begin{enumerate}}
	\newcommand{\enu}{\end{enumerate}}
\newcommand{\bce}{\begin{center}}
	\newcommand{\ece}{\end{center}}
\newcommand{\btb}{\begin{tabular}}
	\newcommand{\etb}{\end{tabular}}
\newcommand{\ba}{\begin{array}}
	\newcommand{\ea}{\end{array}}

\begin{document}

\title{Resilient Decentralized Wireless Federated Learning via Gradient Tracking with AdamW}

%
%
\author{
    \IEEEauthorblockN{Nguyen Van Thieu, Ti Ti Nguyen, Ons Aouedi, Vu Nguyen Ha, and Symeon Chatzinotas}
    \IEEEauthorblockA{\textit{Interdisciplinary Centre for Security, Reliability and Trust (SnT), University of Luxembourg, Luxembourg} \\
    \{vanthieu.nguyen, titi.nguyen, ons.aouedi, vu-nguyen.ha, symeon.chatzinotas\}@uni.lu}
}


%

\markboth{Journal of \LaTeX\ Class Files,~Vol.~14, No.~8, August~2015}%
{Shell \MakeLowercase{\textit{et al.}}: Bare Demo of IEEEtran.cls for IEEE Journals}
%



\maketitle

\begingroup
\renewcommand\thefootnote{}
\footnotetext{%
\copyright~2026 IEEE. Personal use of this material is permitted.
Permission from IEEE must be obtained for all other uses, in any
current or future media, including reprinting/republishing this
material for advertising or promotional purposes, creating new
collective works, for resale or redistribution to servers or lists,
or reuse of any copyrighted component of this work in other works.%
}
\addtocounter{footnote}{-1}
\endgroup

\begin{abstract}

Wireless Internet-of-Things (IoT) edge networks require decentralized learning (DecL) methods that can operate reliably under both heterogeneous local data and communication-constrained wireless links. However, existing decentralized optimization schemes often incur substantial communication overhead and degraded performance when transmissions are constrained by strict airtime budgets, fading channels, and packet losses. This paper proposes QEF-GT-AdamW, a communication-efficient and outage-resilient algorithm for DecL over wireless communication (WCom) networks. The proposed method combines gradient tracking to mitigate the effect of non-IID data, AdamW-based adaptive optimization to improve training stability, and dual-stream biased quantization with error feedback to reduce communication payloads for both model and tracking exchanges. 
To address unreliable broadcast communication, the proposed framework further employs a local fallback strategy when scheduled packets are not successfully received. 
We explicitly model the effect of bandwidth, transmit power, airtime constraints, and fading channels on DecL performance, and establish convergence guarantees for the proposed algorithm under compressed and unreliable wireless communication.
Experimental results on heterogeneous MNIST and CIFAR-10 settings show that QEF-GT-AdamW consistently improves robustness and convergence performance over representative DecL baselines while achieving favorable accuracy-communication trade-offs under limited wireless resources.
\end{abstract}

 \begin{IEEEkeywords}
Decentralized Learning, Gradient Tracking, Error Feedback, Top-K Compression, Wireless Communication, Packet Loss
 \end{IEEEkeywords}


%
\IEEEpeerreviewmaketitle

\section{Introduction}
\label{sec:introduction}

IoT wireless communication (WCom) networks increasingly rely on decentralized learning (DecL) to train models directly over distributed devices without a central coordinator \cite{beltran2023decentralized}. While this paradigm improves scalability and robustness, its practical deployment is challenged by non-IID local data and unreliable WCom arising from limited bandwidth, strict airtime budgets, fading channels, and packet losses. These issues are tightly coupled in wireless-based DecL, where WCom constraints directly affect learning performance \cite{chen2024communication}.

Among DecL optimization methods, gradient tracking (GT) is an effective mechanism for mitigating client drift under heterogeneous data \cite{pu2021distributed}, since it enables each node to asymptotically follow the network-wide gradient. However, classical GT requires repeated exchanges of both model and tracking variables, which leads to substantial WCom overhead. Recent work such as GTAdam \cite{carnevale2022gtadam} combines GT with Adam-type adaptive momentum estimation to improve convergence speed and robustness, but existing GT-Adam methods are mainly developed under ideal communication assumptions. In parallel, gossip-based SGD \cite{koloskova2019decentralized} and compressed DecL methods \cite{chen2024communication, richtarik2021ef21} reduce payload sizes through limited information exchange and quantization, yet they typically either lack the heterogeneity-handling capability of GT or do not explicitly address the interaction among adaptive optimization, compression, and unreliable wireless links. However, a practically deployable framework for wireless DecL under non-IID data and constrained communication remains missing.

Motivated by this gap, we develop QEF-GT-AdamW, a communication-efficient and outage-resilient DecL framework for wireless edge networks. The proposed approach combines GT for heterogeneity mitigation, AdamW-based adaptive updates, and dual-stream quantization with error feedback (EF) to reduce the payloads associated with exchanging both model and tracking variables, while a local fallback mechanism improves robustness when neighbor packets are not successfully received. In contrast to idealized DecL models, the framework explicitly models synchronized wireless learning with computation-dependent airtime, bandwidth and transmit-power constraints, and fading-induced outages. 
We also establish convergence guarantees for smooth convex objectives under compression and unreliable communication, together with a stronger linear-rate characterization under an additional gradient-dominance condition.
Experiments on heterogeneous MNIST and CIFAR-10 settings over fading wireless channels show that the proposed method achieves favorable robustness-accuracy-communication trade-offs compared with representative decentralized baselines.

\section{System Model}
\label{sec:system_model}

We consider a DecL framework over a WCom network with $N$ nodes, where each node holds a local dataset and exchanges information only with its neighbors over WCom channels. The goal is to minimize the overall average empirical loss under non-IID data distributions.

\subsection{DecL Background}

\subsubsection{Decentralized GT Learning}

GT enables nodes to asymptotically follow the global gradient, even under heterogeneous local data distributions.
Each node $i$ maintains a local model $\mathbf{x}_i^k \in \mathbb{R}^d$ and an auxiliary GT variable $\mathbf{y}_i^k \in \mathbb{R}^d$.
Let $f_i:\mathbb{R}^d\rightarrow\mathbb{R}$ be the local loss function at node $i$, and let
$\mathbf{W}=[w_{ij}]\in\mathbb{R}^{N\times N}$ be a fixed doubly-stochastic baseline mixing matrix satisfying
$\sum_{j=1}^{N}w_{ij}=1$,
$\sum_{i=1}^{N}w_{ij}=1$, and
$w_{ij}\geq0$.
The classical GT updates \cite{pu2021distributed} at iteration $k$ are:
$\mathbf{x}_i^{k+1} = \sum_{j=1}^{N} w_{ij}\mathbf{x}_j^k -\alpha\mathbf{y}_i^k,$ and 
$\mathbf{y}_i^{k+1} = \sum_{j=1}^{N} w_{ij}\mathbf{y}_j^k
+\nabla f_i(\mathbf{x}_i^{k+1}) -\nabla f_i(\mathbf{x}_i^k),$
where $\alpha>0$ is a step size.
This mechanism enables the tracking variables to follow the network-wide gradient and supports convergence toward the global objective
\begin{equation}
F(\mathbf{x})
\triangleq
\frac{1}{N}
\sum_{i=1}^{N} f_i(\mathbf{x}).
\label{eq:global_obj}
\end{equation}

\subsubsection{Adaptive Optimization and AdamW}

Adam is an adaptive optimization algorithm that uses exponential moving averages of first- and second-order gradients to accelerate convergence. However, its coupling of weight decay with adaptive updates can lead to inconsistent regularization and reduced generalization, especially in decentralized settings. AdamW \cite{zhou2024towards} addresses this by \emph{decoupling weight decay from gradient-based updates}, leading to more stable optimization.

\subsection{DecL with Wireless Communication}
\label{subsec:wireless}

\begin{figure}[t]
    \centering
    \includegraphics[width=0.95\linewidth]{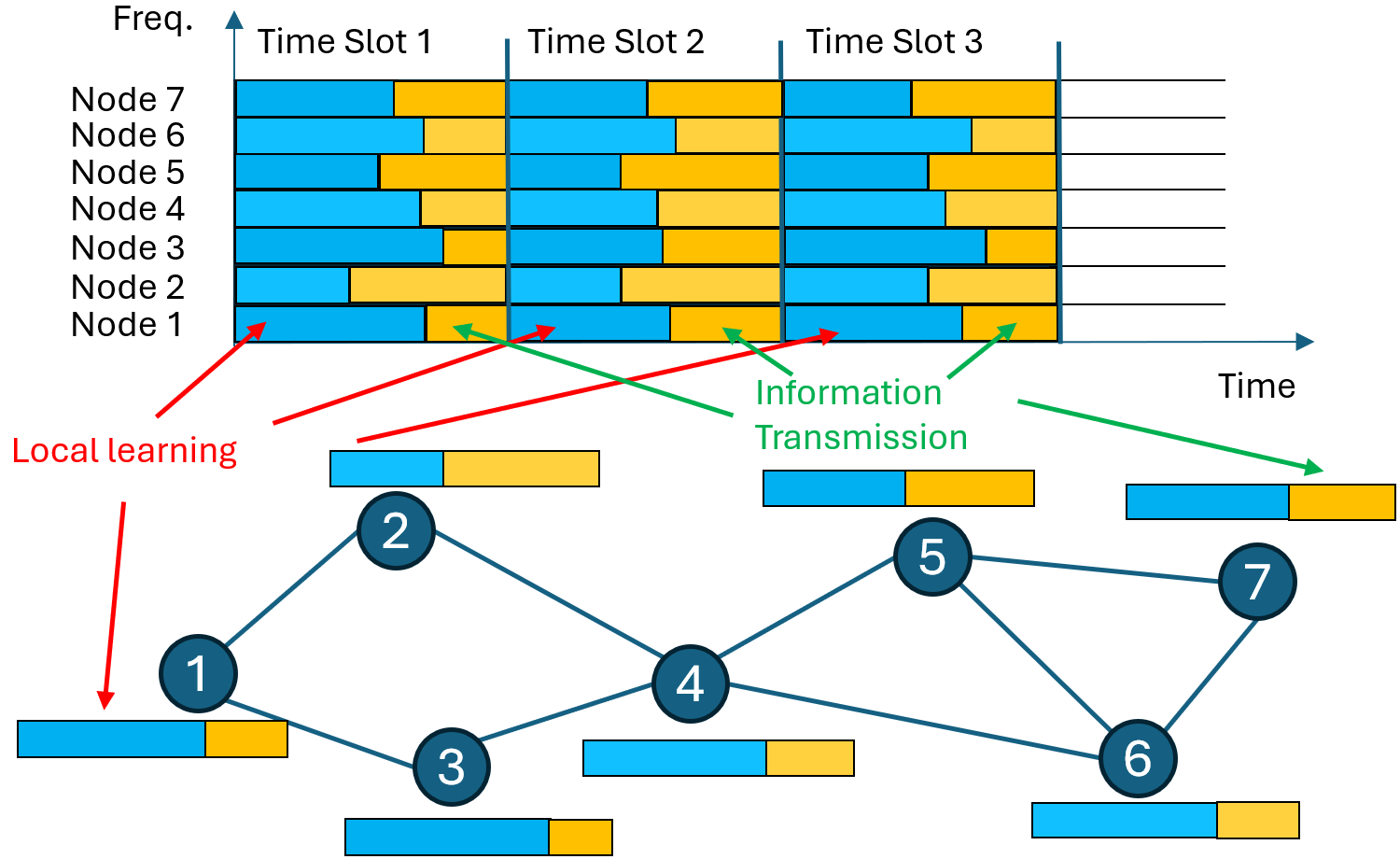}
    \captionsetup{font=footnotesize}
    \caption{A TS-based learning and information exchange framework.}
    \label{fig:sysmodel}
\end{figure}

\subsubsection{Time-Slot based Learning and Information Exchange Framework}

A synchronized time-slot (TS) protocol governs training and communication, as illustrated in Fig.~\ref{fig:sysmodel}. In each TS, nodes perform local updates and transmit over orthogonal subcarriers with limited airtime and unreliable links. Each TS has a strict deadline $\Delta t_{\max}^{(\mathrm{iter})}$.
Node $i$'s local computation time is modeled as
$T_{\mathrm{comp},i}^k = R_i^k\Delta T,$
where $R_i^k$ follows a truncated normal distribution \cite{burkardt2014truncated} and $\Delta T>0$ denotes the basic computation-time unit.
The remaining airtime available to node $i$ for transmission is
\begin{equation}
\Delta t_{\mathrm{air},i}^k
=
\Delta t_{\max}^{(\mathrm{iter})}
-
T_{\mathrm{comp},i}^k.
\end{equation}
This implies that devices with slower computation have less time to transmit, thus increasing the risk of outage.

\subsubsection{Channel Model and Outage Probability}

Let $d_{ij}$ denote the Euclidean distance between nodes $i$ and $j$.
The pathloss is given by
$\nu_{ij} = \nu_0\left(d_0/d_{ij}\right)^{\zeta},$
where $\zeta>0$ is the pathloss exponent, $\nu_0$ is the reference pathloss at distance $d_0$, and $d_0$ is the reference distance.
The small-scale fading coefficient $h_{ij}^k$ follows a Rician distribution with Rician factor $\kappa\geq0$.

Let $P_i$ and $B_i$ denote the transmission power and bandwidth of node $i$, respectively.
The average SNR of the transmission from node $i$ to node $j$ is
$\bar{\gamma}_{ij}=\frac{P_i\nu_{ij}}{N_0B_i},$
where $N_0$ is the noise power spectral density.
The outage probability at iteration $k$ is estimated as \cite{simon2004digital}
\begin{equation}
\label{eq:outage-rician}
p_{ij}^{\sf outage,k}
=
1-
\mathcal{Q}_1^{\sf M}
\left(
\sqrt{2\kappa},
\sqrt{
\frac{
2(1+\kappa)\gamma_i^{\sf th,k}
}{
\bar{\gamma}_{ij}
}
}
\right),
\end{equation}
where $\mathcal{Q}_1^{\sf M}(\cdot,\cdot)$ is the first-order Marcum $Q$-function \cite{simon2004digital}, and $\gamma_i^{\sf th,k}$ is the SNR threshold of node $i$ at iteration $k$.
The threshold $\gamma_i^{\sf th,k}$ is determined by the amount of information exchanged and the available airtime according to
$B_i\log_2\left(1+\gamma_i^{\sf th,k}\right)
=D_i/\Delta t_{\mathrm{air},i}^k,$
where $D_i$ denotes the amount of information transmitted by node $i$, which depends on the model dimension $d$ and the compression scheme.

\section{Proposed Algorithm: QEF-GT-AdamW}
\label{sec:algorithm}

This section presents QEF-GT-AdamW, which combines robust consensus mixing under packet losses, AdamW-based adaptive updates, GT for heterogeneous data, and dual-stream quantization with EF for both model and tracking exchanges.

\subsection{Initialization and Robust Mixing}
\label{subsec:init_mixing}

Each node $i$ maintains a local model $\mathbf{x}_i^k\in\mathbb{R}^d$, a gradient-tracking variable $\mathbf{y}_i^k\in\mathbb{R}^d$, AdamW moment buffers $(\mathbf{m}_i^k,\mathbf{v}_i^k)$, and dual EF residuals $(\mathbf{e}_{x,i}^k,\mathbf{e}_{y,i}^k)$.
Let $\mathcal{N}_i$ denote the neighbor set of node $i$ in the baseline communication graph, and let $\mathcal{N}_i^{\mathrm{in}}$ denote its set of potential in-neighbors.
To prevent duplicate processing, node $i$ stores the last accepted sequence number $\mathrm{LastSeq}_i[j]$ for each $j\in\mathcal{N}_i^{\mathrm{in}}$.

To ensure that both the primal and tracking streams are initialized consistently before the first communication round, the warm start is performed as in Algorithm~\ref{alg:init_algo}.
Starting from an initial model $\mathbf{x}_i^0$, node $i$ computes its local gradient
$\mathbf{g}_i^0=\nabla f_i(\mathbf{x}_i^0)$ and sets
$\mathbf{y}_i^0=\mathbf{g}_i^0$.
The AdamW moment buffers are initialized to zero.
For consistency with the subsequent EF recursion, the pre-initialization residuals are set to
$\mathbf{e}_{x,i}^{-1}=\mathbf{0}$ and
$\mathbf{e}_{y,i}^{-1}=\mathbf{0}$.
The initial model and tracking variables are then compressed using the corresponding compressors $\mathcal{Q}_x$ and $\mathcal{Q}_y$ before being broadcast to neighboring nodes.

\begin{algorithm}[t]
\caption{\small \textsc{Initialization at node $i$ (Warm Start)}}
\label{alg:init_algo}
\begin{algorithmic}[1]
\State \textbf{Given:} mixing weights $\{w_{ij}\}$; stepsizes $\{\alpha_k\}$; AdamW parameters $(\beta_m,\beta_v,\varepsilon,\lambda)$; compressors $\mathcal{Q}_x,\mathcal{Q}_y$.
\State Initialize $\mathbf{x}_i^0$.
\State Compute $\mathbf{g}_i^0=\nabla f_i(\mathbf{x}_i^0)$ and set $\mathbf{y}_i^0=\mathbf{g}_i^0$.
\State Set $\mathbf{m}_i^0=\mathbf{0}$ and $\mathbf{v}_i^0=\mathbf{0}$.
\State Set $\mathbf{e}_{x,i}^{-1}=\mathbf{0}$ and $\mathbf{e}_{y,i}^{-1}=\mathbf{0}$.
\State Set $\mathrm{Seq}_i\gets0$ and $\mathrm{LastSeq}_i[j]\gets-1$ for all $j\in\mathcal{N}_i^{\mathrm{in}}$.
\State Warm-start compression:
$\tilde{\mathbf{x}}_i^0
\gets
\mathbf{x}_i^0+\mathbf{e}_{x,i}^{-1}$,
$\mathbf{c}_{x,i}^0
\gets
\mathcal{Q}_x(\tilde{\mathbf{x}}_i^0)$,
$\mathbf{e}_{x,i}^0
\gets
\tilde{\mathbf{x}}_i^0-\mathbf{c}_{x,i}^0$.
\State Similarly,
$\tilde{\mathbf{y}}_i^0
\gets
\mathbf{y}_i^0+\mathbf{e}_{y,i}^{-1}$,
$\mathbf{c}_{y,i}^0
\gets
\mathcal{Q}_y(\tilde{\mathbf{y}}_i^0)$,
$\mathbf{e}_{y,i}^0
\gets
\tilde{\mathbf{y}}_i^0-\mathbf{c}_{y,i}^0$.
\State Broadcast packet $(\mathrm{Seq}_i,\mathbf{c}_{x,i}^0,\mathbf{c}_{y,i}^0)$ to $\mathcal{N}_i$.
\end{algorithmic}
\end{algorithm}

At iteration $k$, node $i$ accepts a decoded packet
$(\hat{\mathbf{x}}_j^k,\hat{\mathbf{y}}_j^k)$ from neighbor $j$ only if its sequence number is larger than $\mathrm{LastSeq}_i[j]$; otherwise, the packet is treated as missing.
Here, the decoded vectors correspond to the compressor outputs, i.e.,
$\hat{\mathbf{x}}_j^k=\mathbf{c}_{x,j}^k$ and
$\hat{\mathbf{y}}_j^k=\mathbf{c}_{y,j}^k$.

Let $a_{j\to i}^k\in\{0,1\}$ indicate successful reception, where
$a_{j\to i}^k=1$ means that node $i$ successfully accepts the packet transmitted by node $j$, and $a_{j\to i}^k=0$ otherwise.
Under the outage model in \eqref{eq:outage-rician},
$\Pr(a_{j\to i}^k=1) = 1-p_{ji}^{\sf outage,k}.$
To preserve stochastic mixing under packet losses, missing neighbor weights are reassigned to the self-loop:
\begin{equation}
b_{ij}^k
\triangleq
w_{ij}a_{j\to i}^k,
\quad j\neq i,
\qquad
b_{ii}^k
\triangleq
1-\sum_{j\neq i}b_{ij}^k.
\end{equation}
Thus, the realized mixing matrix
$\mathbf{B}^k=[b_{ij}^k]\in\mathbb{R}^{N\times N}$
remains row-stochastic.
The robust mixed variables used by node $i$ are
$\mathbf{X}_i^k=b_{ii}^k\mathbf{x}_i^k+\sum_{j\neq i}
b_{ij}^k\hat{\mathbf{x}}_j^k,$
and
$\mathbf{Y}_i^k=b_{ii}^k\mathbf{y}_i^k+\sum_{j\neq i}b_{ij}^k\hat{\mathbf{y}}_j^k.$

Define the stacked vectors
\begin{equation}
\begin{aligned}
\mathbf{x}^k
&\triangleq
\operatorname{col}
(\mathbf{x}_1^k,\ldots,\mathbf{x}_N^k),
\quad
\mathbf{y}^k
\triangleq
\operatorname{col}
(\mathbf{y}_1^k,\ldots,\mathbf{y}_N^k),\\
\hat{\mathbf{x}}^k
&\triangleq
\operatorname{col}
(\hat{\mathbf{x}}_1^k,\ldots,\hat{\mathbf{x}}_N^k),
\quad
\hat{\mathbf{y}}^k
\triangleq
\operatorname{col}
(\hat{\mathbf{y}}_1^k,\ldots,\hat{\mathbf{y}}_N^k),\\
\mathbf{X}^k
&\triangleq
\operatorname{col}
(\mathbf{X}_1^k,\ldots,\mathbf{X}_N^k),
\quad
\mathbf{Y}^k
\triangleq
\operatorname{col}
(\mathbf{Y}_1^k,\ldots,\mathbf{Y}_N^k).
\end{aligned}
\end{equation}

Furthermore, define
$\mathbf{A}^k=[A_{ij}^k]\in\mathbb{R}^{N\times N}$ by
$A_{ij}^k = b_{ij}^k$ for $j \neq i$ and $0$ if $j = i$.
Then, the robust mixing operation admits the compact form
\begin{equation}
\mathbf{X}^k
=
(\mathbf{B}^k\otimes\mathbf{I}_d)\mathbf{x}^k
+
(\mathbf{A}^k\otimes\mathbf{I}_d)
(\hat{\mathbf{x}}^k-\mathbf{x}^k),
\end{equation}
with the analogous expression for $\mathbf{Y}^k$.
These mixed variables serve as the robust consensus states used in the local adaptive update and gradient-tracking steps described next.

\subsection{AdamW Update, GT, and Dual Quantization with EF}
\label{subsec:adamw_gt_qef}

Using the mixed tracking variable $\mathbf{Y}_i^k$ as a robust estimate of the global gradient, node $i$ updates its local moment estimates and applies bias correction according to
\begin{equation}
\begin{aligned}
\mathbf{m}_i^{k+1}
&=
\beta_m\mathbf{m}_i^k
+
(1-\beta_m)\mathbf{Y}_i^k,\\
\mathbf{v}_i^{k+1}
&=
\beta_v\mathbf{v}_i^k
+
(1-\beta_v)
\left(
\mathbf{Y}_i^k\odot\mathbf{Y}_i^k
\right),
\end{aligned}
\label{eq:ud:v_i}
\end{equation}
\begin{equation}
\label{eq:adam_bias}
\hat{\mathbf{m}}_i^{k+1}
=
\mathbf{m}_i^{k+1}/(1-\beta_m^{k+1}),
\quad
\hat{\mathbf{v}}_i^{k+1}
=
\mathbf{v}_i^{k+1}(1-\beta_v^{k+1}),
\end{equation}
where $\odot$ denotes element-wise multiplication, and $\beta_m,\beta_v\in(0,1)$ are the momentum parameters.
The primal model is subsequently updated using the AdamW rule:
\begin{equation}
\mathbf{x}_i^{k+1}
=
\mathbf{X}_i^k
-
\alpha_k
\left(
\frac{
\hat{\mathbf{m}}_i^{k+1}
}{
\sqrt{\hat{\mathbf{v}}_i^{k+1}}
+
\varepsilon
}
+
\lambda\mathbf{X}_i^k
\right),
\label{eq:primal_update}
\end{equation}
where $\alpha_k>0$ is the step size, $\varepsilon>0$ is a small numerical constant, and $\lambda\geq0$ is the decoupled weight-decay coefficient.

After the primal update, node $i$ computes the new local gradient
$\mathbf{g}_i^{k+1}=\nabla f_i(\mathbf{x}_i^{k+1}),$
and updates the tracking variable as
\begin{equation}
\mathbf{y}_i^{k+1}
=
\mathbf{Y}_i^k
+
\left(
\mathbf{g}_i^{k+1}
-
\mathbf{g}_i^k
\right).
\label{eq:tracking_update}
\end{equation}

To reduce WCom overhead, both $\mathbf{x}_i^{k+1}$ and $\mathbf{y}_i^{k+1}$ are compressed using separate compressors $\mathcal{Q}_x$ and $\mathcal{Q}_y$ with EF \cite{karimireddy2019error}.
For $z\in\{x,y\}$,
\begin{align}
\tilde{\mathbf{z}}_i^{k+1}
&=
\mathbf{z}_i^{k+1}
+
\mathbf{e}_{z,i}^k \quad \text{(Add residual)},
\\
\mathbf{c}_{z,i}^{k+1}
&=
\mathcal{Q}_z
\left(
\tilde{\mathbf{z}}_i^{k+1}
\right) \quad \text{(Compress)},
\\
\mathbf{e}_{z,i}^{k+1}
&=
\tilde{\mathbf{z}}_i^{k+1}
-
\mathbf{c}_{z,i}^{k+1} \quad \text{(Update residual)}.
\end{align}

Finally, node $i$ increments its local sequence counter and transmits
$(\mathrm{Seq}_i,\mathbf{c}_{x,i}^{k+1},\mathbf{c}_{y,i}^{k+1})$
to its neighbors.
The overall per-round routine is summarized in Algorithm~\ref{alg:qef_gt_adamw}.

\begin{algorithm}[!t]
\caption{\small \textsc{QEF-GT-AdamW at node $i$ (iteration $k\geq0$)}}
\label{alg:qef_gt_adamw}
\small
\begin{algorithmic}[1]
\State \textbf{Reception (decode new packets only):}
\For{each $j\in\mathcal{N}_i$}
    \If{$a_{j\to i}^k=1$ and received $\mathrm{Seq}_j>\mathrm{LastSeq}_i[j]$}
        \State Decode payload into $(\hat{\mathbf{x}}_j^k,\hat{\mathbf{y}}_j^k)$; set $\mathrm{LastSeq}_i[j]\gets\mathrm{Seq}_j$.
    \Else
        \State Treat $j$ as \textit{missing at iteration $k$}.
    \EndIf
\EndFor
\State \textbf{Robust Mixing with Local Fallback:}
Compute $\mathbf{X}_i^k$ and $\mathbf{Y}_i^k$ using the dynamic weights defined in Section~\ref{subsec:init_mixing}.
\State \textbf{AdamW Update:}
Update $\mathbf{m}_i^{k+1}$ and $\mathbf{v}_i^{k+1}$ using $\mathbf{Y}_i^k$ as in \eqref{eq:ud:v_i}.
Compute the bias-corrected moments $\hat{\mathbf{m}}_i^{k+1}$ and $\hat{\mathbf{v}}_i^{k+1}$.
Update the local model according to \eqref{eq:primal_update}.
\State \textbf{Gradient Tracking Update:}
Compute $\mathbf{g}_i^{k+1}=\nabla f_i(\mathbf{x}_i^{k+1})$ and update the tracking variable according to \eqref{eq:tracking_update}.
\State \textbf{Compression with EF Strategy:}
For each $z\in\{x,y\}$, compute $\tilde{\mathbf{z}}_i^{k+1}$, $\mathbf{c}_{z,i}^{k+1}$, and $\mathbf{e}_{z,i}^{k+1}$ using the EF recursion.
\State \textbf{Broadcast:}
Increment $\mathrm{Seq}_i\gets\mathrm{Seq}_i+1$.
Broadcast packet $(\mathrm{Seq}_i,\mathbf{c}_{x,i}^{k+1},\mathbf{c}_{y,i}^{k+1})$ to $\mathcal{N}_i$.
\State Set $\mathbf{g}_i^k\gets\mathbf{g}_i^{k+1}$.
\end{algorithmic}
\end{algorithm}

\subsection{Convergence Analysis}
\label{sec:convergence}

\noindent\textit{Reweighted Objective and Notation:}
For the convergence analysis, we specialize to a constant stepsize
$\alpha_k\equiv\alpha>0$.
Due to packet-dependent local fallback, $\mathbf{B}^k$ is row-stochastic but generally not doubly stochastic.
Hence, the uniform network average is no longer preserved, and the convergence dynamics are governed by the Perron weighting induced by the effective mixing matrix \cite{nedic2014distributed}.

Let
$\bar{\mathbf{B}}\triangleq\mathbb{E}[\mathbf{B}^k]$
and let $\boldsymbol{\pi}\succ\mathbf{0}$ be the normalized left Perron vector \cite{pu2020push} satisfying
$\boldsymbol{\pi}^{\top}\bar{\mathbf{B}}=\boldsymbol{\pi}^{\top},$ and
$\boldsymbol{\pi}^{\top}\mathbf{1}=1.$
Accordingly, we analyze convergence with respect to the reweighted objective
\begin{equation}
F_{\pi}(\mathbf{x})
\triangleq
\sum_{i=1}^{N}\pi_i f_i(\mathbf{x}),
\qquad
\mathbf{x}_{\pi}^{\star}
\in
\argmin_{\mathbf{x}\in\mathbb{R}^d}
F_{\pi}(\mathbf{x}),
\label{eq:perron-objective}
\end{equation}
and define
$F_{\pi}^{\star} \triangleq F_{\pi}(\mathbf{x}_{\pi}^{\star}).$
The Perron weights characterize the long-term asymmetry induced by packet-dependent mixing.
In the doubly-stochastic case,
$\boldsymbol{\pi}=\frac{1}{N}\mathbf{1}$
and $F_{\pi}$ reduces to the original objective \eqref{eq:global_obj}.
For a stacked vector
$\mathbf{z}=\operatorname{col}(\mathbf{z}_1,\ldots,\mathbf{z}_N)\in\mathbb{R}^{Nd}$,
define
\begin{equation}
\bar{\mathbf{z}}_{\pi}
\triangleq
(\boldsymbol{\pi}^{\top}\otimes\mathbf{I}_d)\mathbf{z},
\qquad
\mathbf{\Pi}_{\pi}^{\perp}
\triangleq
\mathbf{I}_{Nd}
-
(\mathbf{1}\boldsymbol{\pi}^{\top}\otimes\mathbf{I}_d).
\label{eq:perron-notation}
\end{equation}
Thus,
$\|\mathbf{\Pi}_{\pi}^{\perp}\mathbf{z}\|$
measures model disagreement from Perron consensus.
Define
\begin{equation}
\Xi^k
\triangleq
\frac{1}{N}
\mathbb{E}
\left[
\|
\mathbf{\Pi}_{\pi}^{\perp}\mathbf{x}^k
\|^2
\right],
\qquad
\bar{\mathbf{g}}_{\pi}^k
\triangleq
\sum_{i=1}^{N}
\pi_i\nabla f_i(\mathbf{x}_i^k).
\label{eq:Xi-gbar}
\end{equation}

\begin{assumption}[Objective regularity \cite{koloskova2021improved}]
\label{ass:objective}
Each $f_i$ is convex and $L$-smooth, and $F_{\pi}$ admits at least one minimizer.
\end{assumption}

\begin{assumption}[Compression \cite{richtarik2021ef21}]
\label{ass:compression}
For $z\in\{x,y\}$, the compressor
$\mathcal{Q}_z:\mathbb{R}^d\rightarrow\mathbb{R}^d$
satisfies
\begin{equation}
\mathbb{E}
\|
\mathcal{Q}_z(\mathbf{v})
-
\mathbf{v}
\|^2
\leq
(1-\delta_z)
\|\mathbf{v}\|^2,
\qquad
0<\delta_z\leq1,
\label{eq:compression}
\end{equation}
for all $\mathbf{v}\in\mathbb{R}^d$.
For deterministic Top-$K$, $\delta_z=K_z/d$, where $K_z$ denotes the number of retained coordinates for stream $z$.
\end{assumption}

\begin{assumption}[Random mixing \cite{pu2020push}]
\label{ass:mixing}
The matrices $\{\mathbf{B}^k\}$ are i.i.d.\ and row-stochastic,
$\bar{\mathbf{B}}$ is primitive, and there exists
$\rho_{\pi}\in(0,1]$ such that
\begin{equation}
\mathbb{E}
\left[
\left\|
\mathbf{\Pi}_{\pi}^{\perp}
(\mathbf{B}^k\otimes\mathbf{I}_d)
\mathbf{z}
\right\|^2
\right]
\leq
(1-\rho_{\pi})
\left\|
\mathbf{\Pi}_{\pi}^{\perp}\mathbf{z}
\right\|^2.
\end{equation}
The current packet-delivery randomness is independent of the algorithmic states and compression variables generated before the communication phase of round $k$.
\end{assumption}

\begin{assumption}[Bounded states \cite{beznosikov2022distributed}]
\label{ass:bounded}
There exist finite constants $R_x,R_y,G_{\infty}>0$ such that
\begin{equation}
\frac{1}{N}
\sum_{i=1}^{N}
\mathbb{E}
\|\mathbf{z}_i^k\|^2
\leq
R_z^2,
\quad
z\in\{x,y\},
\quad
\|\mathbf{Y}_i^k\|_{\infty}
\leq
G_{\infty}.
\label{eq:bounded}
\end{equation}
\end{assumption}

For AdamW preconditioner
$\mathbf{D}_i^{k+1}
\triangleq
\operatorname{diag}
\left(
\frac{1}
{\sqrt{\hat{\mathbf{v}}_i^{k+1}}+\varepsilon}
\right),$
where the operations inside the diagonal are element-wise,
Assumption~\ref{ass:bounded} implies
\begin{equation}
h_{\min}\mathbf{I}_d
\preceq
\mathbf{D}_i^{k+1}
\preceq
h_{\max}\mathbf{I}_d,
\quad
h_{\min}=\frac{1}{G_{\infty}+\varepsilon},
\quad
h_{\max}=\frac{1}{\varepsilon}.
\label{eq:D-bound}
\end{equation}

For $z\in\{x,y\}$, the EF recursion gives
\begin{equation}
\hat{\mathbf{z}}_i^k
=
\mathcal{Q}_z
\left(
\mathbf{z}_i^k
+
\mathbf{e}_{z,i}^{k-1}
\right),
\qquad
\mathbf{e}_{z,i}^k
=
\mathbf{z}_i^k
+
\mathbf{e}_{z,i}^{k-1}
-
\hat{\mathbf{z}}_i^k.
\label{eq:EF}
\end{equation}
Here,
$\hat{\mathbf{z}}_i^k\equiv\mathbf{c}_{z,i}^k$
denotes the decoded compressor output.

\begin{lemma}[Communication-error bounds]
\label{lem:errors}
Under Assumptions~\ref{ass:compression}--\ref{ass:bounded},
\begin{equation}
\frac{1}{N}
\sum_{i=1}^{N}
\mathbb{E}
\|\mathbf{e}_{z,i}^k\|^2
\leq
C_{e,z}R_z^2,
\quad
C_{e,z}
=
\frac{
2(1-\delta_z)(2-\delta_z)
}{
\delta_z^2
},
\label{eq:EF-bound}
\end{equation}
for $0<\delta_z<1$, while $C_{e,z}=0$ for $\delta_z=1$.
Defining
\begin{equation}
\boldsymbol{\chi}_{z,i}^k
\triangleq
\hat{\mathbf{z}}_i^k-\mathbf{z}_i^k,
\qquad
\mathcal{C}_z^k
\triangleq
\frac{1}{N}
\sum_{i=1}^{N}
\mathbb{E}
\|\boldsymbol{\chi}_{z,i}^k\|^2.
\end{equation}
Then,
\begin{equation}
\mathcal{C}_z^k
\leq
4C_{e,z}R_z^2.
\label{eq:C-bound}
\end{equation}

Furthermore, define
$\boldsymbol{\chi}_z^k
\triangleq
\operatorname{col}
\left(
\boldsymbol{\chi}_{z,1}^k,
\ldots,
\boldsymbol{\chi}_{z,N}^k
\right).$
Let
$\mathbf{w}^k
\triangleq
(\boldsymbol{\pi}^{\top}\otimes\mathbf{I}_d)
\mathbf{X}^k$ and
$\mathbf{r}_x^k
\triangleq
\mathbf{w}^k-\bar{\mathbf{x}}_{\pi}^k$.
Then,
\begin{equation}
\mathcal{R}_x^k
\triangleq
\mathbb{E}
\|\mathbf{r}_x^k\|^2
\leq
2N\kappa_B\Xi^k
+
2N\kappa_A\mathcal{C}_x^k,
\label{eq:Rx-bound}
\end{equation}
where
$\kappa_B \triangleq \mathbb{E}\left\|
\boldsymbol{\pi}^{\top}(\mathbf{B}^k-\bar{\mathbf{B}})
\right\|_2^2,$ and
$\kappa_A\triangleq \mathbb{E}\left\| \boldsymbol{\pi}^{\top}\mathbf{A}^k \right\|_2^2.$
\end{lemma}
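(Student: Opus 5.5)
The proof has three parts. First, a one-step recursion for the EF residual, unrolled into a geometric series. Second, a triangle inequality for $\boldsymbol{\chi}$. Third, an algebraic decomposition of $\mathbf{r}_x^k$ that exploits the Perron-vector identity.

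\emph{Step 1 (residual bound).} By \eqref{eq:EF}, $\mathbf{e}_{z,i}^k=\mathbf{v}-\mathcal{Q}_z(\mathbf{v})$ with $\mathbf{v}=\mathbf{z}_i^k+\mathbf{e}_{z,i}^{k-1}$. Conditioning on the past and applying Assumption~\ref{ass:compression} gives
\begin{equation*}
\mathbb{E}\|\mathbf{e}_{z,i}^k\|^2\le(1-\delta_z)\,\mathbb{E}\|\mathbf{z}_i^k+\mathbf{e}_{z,i}^{k-1}\|^2 .
\end{equation*}
Young's inequality with parameter $\eta>0$ then yields
\begin{equation*}
\mathbb{E}\|\mathbf{e}_{z,i}^k\|^2\le(1-\delta_z)(1+\eta)\,\mathbb{E}\|\mathbf{e}_{z,i}^{k-1}\|^2+(1-\delta_z)(1+1/\eta)\,\mathbb{E}\|\mathbf{z}_i^k\|^2 .
\end{equation*}
Take $\eta=\delta_z/(2(1-\delta_z))$. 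Then $(1-\delta_z)(1+\eta)=1-\delta_z/2$, and $(1-\delta_z)(1+1/\eta)=(1-\delta_z)(2-\delta_z)/\delta_z$. Average over $i$ and use Assumption~\ref{ass:bounded}. Since $\mathbf{e}^{-1}=\mathbf{0}$, unrolling the geometric recursion gives the bound
\begin{equation*}
\frac{(1-\delta_z)(2-\delta_z)}{\delta_z}\cdot\frac{2}{\delta_z}\,R_z^2=C_{e,z}R_z^2 .
\end{equation*}
When $\delta_z=1$ the compressor is exact and the residual vanishes identically, so $C_{e,z}=0$.

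\emph{Step 2 (compression error).} From \eqref{eq:EF}, $\boldsymbol{\chi}_{z,i}^k=\hat{\mathbf{z}}_i^k-\mathbf{z}_i^k=\mathbf{e}_{z,i}^{k-1}-\mathbf{e}_{z,i}^k$. Hence $\|\boldsymbol{\chi}\|^2\le 2\|\mathbf{e}^{k-1}\|^2+2\|\mathbf{e}^{k}\|^2$. Step 1 applies at both indices, which gives $\mathcal{C}_z^k\le 4C_{e,z}R_z^2$ directly.

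\emph{Step 3 (Perron drift).} Write the compact mixing form as
\begin{equation*}
\mathbf{X}^k=(\mathbf{B}^k\otimes\mathbf{I}_d)\mathbf{x}^k+(\mathbf{A}^k\otimes\mathbf{I}_d)\boldsymbol{\chi}_x^k .
\end{equation*}
Since $\boldsymbol{\pi}^\top\bar{\mathbf{B}}=\boldsymbol{\pi}^\top$, we have $\mathbf{w}^k-\bar{\mathbf{x}}_\pi^k=(\boldsymbol{\pi}^\top(\mathbf{B}^k-\bar{\mathbf{B}})\otimes\mathbf{I}_d)\mathbf{x}^k+(\boldsymbol{\pi}^\top\mathbf{A}^k\otimes\mathbf{I}_d)\boldsymbol{\chi}_x^k$. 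The matrix $\mathbf{B}^k$ is row-stochastic and $\bar{\mathbf{B}}$ is too, so $\boldsymbol{\pi}^\top(\mathbf{B}^k-\bar{\mathbf{B}})\mathbf{1}=0$. This allows $\mathbf{x}^k$ to be replaced by $\mathbf{\Pi}_\pi^\perp\mathbf{x}^k$ in the first term, because the component along $\mathbf{1}\otimes\bar{\mathbf{x}}_\pi^k$ is annihilated.

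Apply $\|a+b\|^2\le2\|a\|^2+2\|b\|^2$. For a row vector $\mathbf{u}^\top$ the operator norm of $\mathbf{u}^\top\otimes\mathbf{I}_d$ equals $\|\mathbf{u}\|_2$. Assumption~\ref{ass:mixing} says packet randomness is independent of $\mathbf{x}^k$ and of the compression variables, so we can take expectation over $\mathbf{B}^k$ (and thus $\mathbf{A}^k$) conditionally. The first term becomes $2\kappa_B\,\mathbb{E}\|\mathbf{\Pi}_\pi^\perp\mathbf{x}^k\|^2=2N\kappa_B\Xi^k$. The second becomes $2\kappa_A\,\mathbb{E}\|\boldsymbol{\chi}_x^k\|^2=2N\kappa_A\mathcal{C}_x^k$.

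The main obstacle is the independence bookkeeping in Step 3. $\boldsymbol{\chi}_x^k$ depends on $\mathbf{e}^{k-1}$ and on the compression of $\mathbf{x}^k$, and both must be generated before round $k$'s packet draws for the factorization $\mathbb{E}[\|\mathbf{u}_k\|^2\|\boldsymbol{\chi}\|^2]=\kappa\,\mathbb{E}\|\boldsymbol{\chi}\|^2$ to hold. I would rely on the last sentence of Assumption~\ref{ass:mixing} for exactly this. A secondary subtlety is in Step 1: when $\mathcal{Q}_z$ is random, the conditional use of \eqref{eq:compression} must be taken before Young's inequality.
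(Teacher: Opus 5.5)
Your proposal is correct and follows the paper's proof essentially step for step. Your Young parameter $\eta=\delta_z/(2(1-\delta_z))$ is the reciprocal of the paper's $\tau_z$, placed on the residual rather than on the state, so it yields the same contraction $1-\delta_z/2$ and the same constant $C_{e,z}$. Steps 2--3 use the same identity $\boldsymbol{\chi}_{z,i}^k=\mathbf{e}_{z,i}^{k-1}-\mathbf{e}_{z,i}^k$, the same row-stochastic decomposition of $\mathbf{r}_x^k$, and then independence together with $\|\mathbf{a}+\mathbf{b}\|^2\le 2\|\mathbf{a}\|^2+2\|\mathbf{b}\|^2$.
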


\begin{proof}
Using \eqref{eq:compression}, Young's inequality with
$\tau_z=\frac{2(1-\delta_z)}{\delta_z}$ gives
\begin{equation}
\mathbb{E}
\|\mathbf{e}_{z,i}^k\|^2
\leq
\left(
1-\frac{\delta_z}{2}
\right)
\mathbb{E}
\|\mathbf{e}_{z,i}^{k-1}\|^2
+
\frac{
(1-\delta_z)(2-\delta_z)
}{
\delta_z
}
\mathbb{E}
\|\mathbf{z}_i^k\|^2.
\end{equation}
Averaging over all nodes and unrolling the recursion gives \eqref{eq:EF-bound}.
Since
$\boldsymbol{\chi}_{z,i}^k = \mathbf{e}_{z,i}^{k-1} - \mathbf{e}_{z,i}^k,$
\eqref{eq:C-bound} follows.

Furthermore, from
$\mathbf{X}^k=(\mathbf{B}^k\otimes\mathbf{I}_d)\mathbf{x}^k +(\mathbf{A}^k\otimes\mathbf{I}_d) \boldsymbol{\chi}_x^k$
and
$(\mathbf{B}^k-\bar{\mathbf{B}})\mathbf{1}= \mathbf{0},$
we obtain
\begin{equation}
\mathbf{r}_x^k
=\left(\boldsymbol{\pi}^{\top}
(\mathbf{B}^k-\bar{\mathbf{B}})\otimes\mathbf{I}_d\right)
\mathbf{\Pi}_{\pi}^{\perp}\mathbf{x}^k
+
\left(\boldsymbol{\pi}^{\top}\mathbf{A}^k\otimes\mathbf{I}_d\right)
\boldsymbol{\chi}_x^k.
\end{equation}
Independence and
$\|\mathbf{a}+\mathbf{b}\|^2
\leq
2\|\mathbf{a}\|^2
+
2\|\mathbf{b}\|^2$
then give \eqref{eq:Rx-bound}.
\end{proof}

We next express the Perron-average algorithm as an inexact gradient method.
Define 
$\bar{\mathbf{Y}}_{\pi}^k
\triangleq
\sum_{i=1}^{N}
\pi_i\mathbf{Y}_i^k,$ 
$\bar{\mathbf{g}}_{\pi}^k
\triangleq
\sum_{i=1}^{N}
\pi_i\nabla f_i(\mathbf{x}_i^k),$
$
\mathbf{t}^k
\triangleq
\bar{\mathbf{Y}}_{\pi}^k
-
\bar{\mathbf{g}}_{\pi}^k,$
and 
$\mathcal{T}^k
\triangleq
\mathbb{E}
\|\mathbf{t}^k\|^2.$
The tracking-stream compression is included in $\mathcal{T}^k$ because $\mathbf{Y}_i^k$ is the received post-mixing tracking state.
Let
\begin{equation}
\mathbf{a}^k
\triangleq
\sum_{i=1}^{N}
\pi_i
\left(
\mathbf{D}_i^{k+1}
\hat{\mathbf{m}}_i^{k+1}
-
\mathbf{Y}_i^k
\right),
\quad
\mathcal{A}^k
\triangleq
\mathbb{E}
\|\mathbf{a}^k\|^2.
\label{eq:A}
\end{equation}
With
$\mathcal{M}^k\triangleq \sum_{i=1}^{N}\pi_i
\mathbb{E}\left\|\hat{\mathbf{m}}_i^{k+1} - \mathbf{Y}_i^k \right\|^2,$
bounded preconditioning gives
\begin{equation}
\mathcal{A}^k
\leq
2h_{\max}^2\mathcal{M}^k
+
2h_{\Delta}^2dG_{\infty}^2,
\qquad
\mathcal{M}^k
\leq
4dG_{\infty}^2,
\label{eq:A-bound}
\end{equation}
where
$h_{\Delta} \triangleq \max \left\{|h_{\min}-1|, |h_{\max}-1| \right\}.$

By $L$-smoothness,
\begin{equation}
\mathbb{E}
\left\|
\bar{\mathbf{g}}_{\pi}^k
-
\nabla F_{\pi}
\left(
\bar{\mathbf{x}}_{\pi}^k
\right)
\right\|^2
\leq
L^2N\pi_{\max}\Xi^k,
\label{eq:gradient-disagreement}
\end{equation}
where
$\pi_{\max} \triangleq \max_{1\leq i\leq N}\pi_i.$

Taking the Perron-weighted average of the AdamW update and using
$\mathbf{w}^k
=
\bar{\mathbf{x}}_{\pi}^k+\mathbf{r}_x^k$
yields the exact recursion
\begin{equation}
\bar{\mathbf{x}}_{\pi}^{k+1}
=
\bar{\mathbf{x}}_{\pi}^k
-
\alpha
\left[
\nabla F_{\pi}
\left(
\bar{\mathbf{x}}_{\pi}^k
\right)
+
\boldsymbol{\xi}^k
\right],
\label{eq:inexact-GD}
\end{equation}
where
$\boldsymbol{\xi}^k = 
\mathbf{a}^k + \mathbf{t}^k + \bar{\mathbf{g}}_{\pi}^k
- \nabla F_{\pi} \left(\bar{\mathbf{x}}_{\pi}^k\right) + \lambda\mathbf{w}^k
- \frac{1}{\alpha}\mathbf{r}_x^k.$

Define
$\mathcal{E}_k\triangleq \mathbb{E}\|\boldsymbol{\xi}^k\|^2,$ and $\mathcal{W}^k \triangleq \mathbb{E} \|\mathbf{w}^k\|^2.$
Combining \eqref{eq:Rx-bound},
\eqref{eq:A-bound}, and
\eqref{eq:gradient-disagreement} gives
\begin{equation}
\begin{aligned}
\mathcal{E}_k
\leq{}&
5\mathcal{A}^k
+
5\mathcal{T}^k
+
5\lambda^2\mathcal{W}^k
\\
&+
\left(
5L^2N\pi_{\max}
+
\frac{10N\kappa_B}{\alpha^2}
\right)\Xi^k
+
\frac{10N\kappa_A}{\alpha^2}
\mathcal{C}_x^k.
\end{aligned}
\label{eq:E-bound}
\end{equation}

\begin{theorem}[Convergence for smooth convex objectives]
\label{thm:convex}
Suppose Assumptions~\ref{ass:objective}--\ref{ass:bounded} hold and
$0<\alpha\leq1/(2L)$.
Define
\begin{equation}
\widetilde{\mathbf{x}}_{\pi}^{K}
\triangleq
\frac{1}{K}
\sum_{k=0}^{K-1}
\bar{\mathbf{x}}_{\pi}^k,
\qquad
\overline{\mathcal{E}}_{K}
\triangleq
\frac{1}{K}
\sum_{k=0}^{K-1}
\mathcal{E}_k.
\end{equation}
Then,
\begin{equation}
\begin{aligned}
&
\mathbb{E}
\left[
F_{\pi}
\left(
\widetilde{\mathbf{x}}_{\pi}^{K}
\right)
-
F_{\pi}^{\star}
\right]
\\
&\quad\leq
\frac{
\mathbb{E}
\left\|
\bar{\mathbf{x}}_{\pi}^{0}
-
\mathbf{x}_{\pi}^{\star}
\right\|^2
}{
2\alpha K
}
+
R_{\star}
\sqrt{\overline{\mathcal{E}}_{K}}
+
\frac{\alpha(1+2L\alpha)}{2}
\overline{\mathcal{E}}_{K},
\end{aligned}
\label{eq:convex-result}
\end{equation}
where Assumption~\ref{ass:bounded} ensures
$\mathbb{E}
\left\|
\bar{\mathbf{x}}_{\pi}^{k} - \mathbf{x}_{\pi}^{\star} \right\|^2 \leq R_{\star}^2,$
with
$R_{\star}^2 \triangleq 2N\pi_{\max}R_x^2 + 2 \left\| \mathbf{x}_{\pi}^{\star}
\right\|^2.$
Thus, when the perturbations vanish, the standard
$\mathcal{O}(1/K)$ convex rate is recovered.
\end{theorem}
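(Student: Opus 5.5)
The plan is to treat the Perron-average recursion \eqref{eq:inexact-GD} as inexact gradient descent on the convex, $L$-smooth function $F_{\pi}$, and run the standard distance-to-optimum argument with the perturbation $\boldsymbol{\xi}^k$ kept explicit.

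\textbf{Step 1: one-step expansion.} Write $\mathbf{u}^k\triangleq\nabla F_{\pi}(\bar{\mathbf{x}}_{\pi}^k)+\boldsymbol{\xi}^k$. From \eqref{eq:inexact-GD},
\begin{equation*}
\|\bar{\mathbf{x}}_{\pi}^{k+1}-\mathbf{x}_{\pi}^{\star}\|^2
=\|\bar{\mathbf{x}}_{\pi}^{k}-\mathbf{x}_{\pi}^{\star}\|^2
-2\alpha\langle \mathbf{u}^k,\bar{\mathbf{x}}_{\pi}^k-\mathbf{x}_{\pi}^{\star}\rangle
+\alpha^2\|\mathbf{u}^k\|^2 .
\end{equation*}

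\textbf{Step 2: convexity and smoothness.} Split the inner product into a gradient part and a $\boldsymbol{\xi}^k$ part. Convexity gives
\begin{equation*}
\langle\nabla F_{\pi}(\bar{\mathbf{x}}_{\pi}^k),\bar{\mathbf{x}}_{\pi}^k-\mathbf{x}_{\pi}^{\star}\rangle\geq F_{\pi}(\bar{\mathbf{x}}_{\pi}^k)-F_{\pi}^{\star}.
\end{equation*}
Since $F_{\pi}$ is a convex combination of convex $L$-smooth functions, it is itself convex and $L$-smooth. Hence
\begin{equation*}
\|\nabla F_{\pi}(\bar{\mathbf{x}}_{\pi}^k)\|^2\leq 2L\big(F_{\pi}(\bar{\mathbf{x}}_{\pi}^k)-F_{\pi}^{\star}\big),
\end{equation*}
and with Young's inequality
\begin{equation*}
\alpha^2\|\mathbf{u}^k\|^2\leq 2\alpha^2\|\nabla F_{\pi}\|^2+2\alpha^2\|\boldsymbol{\xi}^k\|^2\leq 4L\alpha^2(F_{\pi}-F_{\pi}^{\star})+2\alpha^2\|\boldsymbol{\xi}^k\|^2 .
\end{equation*}
The gradient contributions combine to $-2\alpha(1-2L\alpha)(F_{\pi}-F_{\pi}^{\star})$.

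\textbf{Step 3: recovering the exact constants.} This crude split gives the coefficient $\alpha(1-2L\alpha)$, which is only nonnegative under $\alpha\leq 1/(2L)$ and loses the clean $1/(2\alpha K)$ factor. To match the target form, I would instead use the sharper descent-lemma route. Start from
\begin{equation*}
F_{\pi}(\bar{\mathbf{x}}_{\pi}^{k+1})\leq F_{\pi}(\bar{\mathbf{x}}_{\pi}^k)-\alpha\langle\nabla F_{\pi},\mathbf{u}^k\rangle+\tfrac{L\alpha^2}{2}\|\mathbf{u}^k\|^2 ,
\end{equation*}
combined with convexity at $\mathbf{x}_{\pi}^{\star}$. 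In the resulting identity, the $\|\nabla F_{\pi}\|^2$ terms cancel for $\alpha\leq 1/(2L)$. The leftover pure-$\boldsymbol{\xi}$ terms then collect into $\tfrac{\alpha}{2}(1+2L\alpha)\|\boldsymbol{\xi}^k\|^2$. I expect this constant bookkeeping to be the main obstacle. The fallback is to accept a constant of the same order, stating an explicit inequality of the form
\begin{equation*}
2\alpha\,\mathbb{E}[F_{\pi}(\bar{\mathbf{x}}_{\pi}^k)-F_{\pi}^{\star}]\leq \mathbb{E}\|\bar{\mathbf{x}}_{\pi}^k-\mathbf{x}_{\pi}^\star\|^2-\mathbb{E}\|\bar{\mathbf{x}}_{\pi}^{k+1}-\mathbf{x}_{\pi}^\star\|^2+2\alpha\,\mathbb{E}\langle\boldsymbol{\xi}^k,\mathbf{x}_{\pi}^\star-\bar{\mathbf{x}}_{\pi}^k\rangle+\alpha^2(1+2L\alpha)\mathcal{E}_k .
\end{equation*}

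\textbf{Step 4: the cross term.} Bound it by Cauchy--Schwarz in expectation:
\begin{equation*}
\mathbb{E}\langle\boldsymbol{\xi}^k,\mathbf{x}_{\pi}^\star-\bar{\mathbf{x}}_{\pi}^k\rangle\leq\sqrt{\mathcal{E}_k}\,\sqrt{\mathbb{E}\|\bar{\mathbf{x}}_{\pi}^k-\mathbf{x}_{\pi}^\star\|^2}\leq R_{\star}\sqrt{\mathcal{E}_k}.
\end{equation*}
The claimed $R_\star$ follows from Assumption~\ref{ass:bounded}. By Jensen/Cauchy--Schwarz with Perron weights,
\begin{equation*}
\|\bar{\mathbf{x}}_{\pi}^k\|^2\leq\sum_i\pi_i\|\mathbf{x}_i^k\|^2\leq \pi_{\max}\sum_i\|\mathbf{x}_i^k\|^2\leq N\pi_{\max}R_x^2
\end{equation*}
in expectation, and then $\|a-b\|^2\leq2\|a\|^2+2\|b\|^2$.

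\textbf{Step 5: telescoping and averaging.} Sum over $k=0,\dots,K-1$ and drop the final nonnegative distance term. Divide by $2\alpha K$, and use $\frac1K\sum_k\sqrt{\mathcal{E}_k}\leq\sqrt{\overline{\mathcal{E}}_K}$ (concavity of the square root). Finally, apply Jensen's inequality to the convex $F_{\pi}$ at the averaged iterate $\widetilde{\mathbf{x}}_{\pi}^K$, which yields \eqref{eq:convex-result}.

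When $\overline{\mathcal{E}}_K\to0$, only the $1/(2\alpha K)$ term survives, giving the $\mathcal{O}(1/K)$ rate.
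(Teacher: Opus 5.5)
There is a genuine gap. Your Steps 1, 4 and 5 match the paper's argument: the distance expansion of \eqref{eq:inexact-GD}, Cauchy--Schwarz with $R_\star$, telescoping, concavity of $\sqrt{\cdot}$, and Jensen. But the core of the proof, the one-step inequality in Step 3, is never derived. Your ``fallback'' is exactly the inequality that has to be proved, with the exact constants $2\alpha$ and $\alpha^2(1+2L\alpha)$, so it is not a fallback.

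Step 2 cannot be repaired by retuning Young's inequality. Plain convexity together with $\|\nabla F_\pi\|^2\le 2L(F_\pi-F_\pi^\star)$ always leaves a coefficient strictly smaller than $2\alpha$ in front of $F_\pi(\bar{\mathbf{x}}_\pi^k)-F_\pi^\star$.

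Your descent-lemma route does produce the right constants if you carry it out. You combine the descent lemma, convexity at $\mathbf{x}_\pi^\star$, and the identity $2\alpha\langle \mathbf{u}^k,\bar{\mathbf{x}}_\pi^k-\mathbf{x}_\pi^\star\rangle=\|\bar{\mathbf{x}}_\pi^k-\mathbf{x}_\pi^\star\|^2-\|\bar{\mathbf{x}}_\pi^{k+1}-\mathbf{x}_\pi^\star\|^2+\alpha^2\|\mathbf{u}^k\|^2$. This gives a gradient coefficient $-\frac{\alpha}{2}+L\alpha^2\le 0$ and a noise coefficient $\frac{\alpha}{2}(1+2L\alpha)$. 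However, it bounds $F_\pi(\bar{\mathbf{x}}_\pi^{k+1})-F_\pi^\star$, not $F_\pi(\bar{\mathbf{x}}_\pi^{k})-F_\pi^\star$. After telescoping you would control the average of $\bar{\mathbf{x}}_\pi^1,\dots,\bar{\mathbf{x}}_\pi^K$, not $\widetilde{\mathbf{x}}_\pi^K$ as defined in the theorem.

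The missing ingredient is the strengthened convexity inequality for convex $L$-smooth functions, which follows from co-coercivity and $\nabla F_\pi(\mathbf{x}_\pi^\star)=\mathbf{0}$:
\[
\langle \nabla F_\pi(\mathbf{x}),\mathbf{x}-\mathbf{x}_\pi^\star\rangle\ \ge\ F_\pi(\mathbf{x})-F_\pi^\star+\tfrac{1}{2L}\|\nabla F_\pi(\mathbf{x})\|^2 .
\]
The paper inserts this into the Step 1 expansion, which produces an extra $-\frac{\alpha}{L}\|\mathbf{g}^k\|^2$. Young's inequality with parameter $\tau=\frac{1}{L\alpha}-1$ then gives
\[
\alpha^2\|\mathbf{g}^k+\boldsymbol{\xi}^k\|^2\le\frac{\alpha}{L}\|\mathbf{g}^k\|^2+\frac{\alpha^2}{1-L\alpha}\|\boldsymbol{\xi}^k\|^2 .
\]
Since $\frac{1}{1-L\alpha}\le 1+2L\alpha$ whenever $L\alpha\le\frac12$, the gradient terms cancel exactly. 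You obtain the one-step inequality at $\bar{\mathbf{x}}_\pi^k$ with the full coefficient $2\alpha$ on $F_\pi(\bar{\mathbf{x}}_\pi^k)-F_\pi^\star$ and $\alpha^2(1+2L\alpha)$ on $\|\boldsymbol{\xi}^k\|^2$. The rest of your argument then goes through unchanged and yields \eqref{eq:convex-result} for $\widetilde{\mathbf{x}}_\pi^K$.
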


\begin{proof}
Let
$\mathbf{x}^k=\bar{\mathbf{x}}_{\pi}^k$,
$\mathbf{x}^{\star}=\mathbf{x}_{\pi}^{\star}$, and
$\mathbf{g}^k=\nabla F_{\pi}(\mathbf{x}^k)$.
Using \eqref{eq:inexact-GD}, we have
\begin{equation}
\begin{aligned}
\left\|
\mathbf{x}^{k+1}
-
\mathbf{x}^{\star}
\right\|^2
={}&
\left\|
\mathbf{x}^{k}
-
\mathbf{x}^{\star}
\right\|^2
-
2\alpha
\left\langle
\mathbf{x}^{k}
-
\mathbf{x}^{\star},
\mathbf{g}^{k}
\right\rangle
\\
&\hspace{-0.8cm}-
2\alpha
\left\langle
\mathbf{x}^{k}
-
\mathbf{x}^{\star},
\boldsymbol{\xi}^{k}
\right\rangle
+
\alpha^2
\left\|
\mathbf{g}^{k}
+
\boldsymbol{\xi}^{k}
\right\|^2.
\end{aligned}
\end{equation}

For a convex $L$-smooth function,
\begin{equation}
\left\langle
\mathbf{g}^k,
\mathbf{x}^k-\mathbf{x}^{\star}
\right\rangle
\geq
F_{\pi}(\mathbf{x}^k)
-
F_{\pi}^{\star}
+
\frac{1}{2L}
\|\mathbf{g}^k\|^2.
\end{equation}
Using Young's inequality and $\alpha\leq1/(2L)$ therefore gives
\begin{equation}
\begin{aligned}
2\alpha
\left[
F_{\pi}(\mathbf{x}^k)
-
F_{\pi}^{\star}
\right]
\leq{}&
\left\|
\mathbf{x}^k-\mathbf{x}^{\star}
\right\|^2
-
\left\|
\mathbf{x}^{k+1}-\mathbf{x}^{\star}
\right\|^2
\\
&\hspace{-2.1cm}+
2\alpha
\left\|
\mathbf{x}^k-\mathbf{x}^{\star}
\right\|
\|\boldsymbol{\xi}^k\|
+
\alpha^2
(1+2L\alpha)
\|\boldsymbol{\xi}^k\|^2.
\end{aligned}
\label{eq:convex-one-step}
\end{equation}

Taking expectations and applying Cauchy--Schwarz gives
\begin{equation}
\mathbb{E}
\left[
\left\|
\mathbf{x}^k-\mathbf{x}^{\star}
\right\|
\|\boldsymbol{\xi}^k\|
\right]
\leq
R_{\star}
\sqrt{\mathcal{E}_k}.
\end{equation}
Summing over $k=0,\ldots,K-1$, using
$\frac{1}{K}\sum_{k=0}^{K-1}\sqrt{\mathcal{E}_k}\leq\sqrt{\overline{\mathcal{E}}_K},$
and applying convexity of $F_{\pi}$ proves \eqref{eq:convex-result}.
\end{proof}

The next result characterizes the stronger regime in which the aggregate objective satisfies gradient dominance.

\begin{corollary}[Linear rate under the PL condition]
\label{cor:PL}
Under the conditions of Theorem~\ref{thm:convex}, assume that for some $\mu_{\rm PL} > 0$,
\begin{equation}
\frac{1}{2}
\|\nabla F_{\pi}(\mathbf{x})\|^2
\geq
\mu_{\rm PL}
\left[
F_{\pi}(\mathbf{x})
-
F_{\pi}^{\star}
\right]
\label{eq:PL}
\end{equation}
over the region visited by the iterates, and let
$0<\alpha\leq1/(4L)$.
Define
$\Delta_k \triangleq \mathbb{E}\left[F_{\pi}\left(\bar{\mathbf{x}}_{\pi}^k
\right) - F_{\pi}^{\star} \right].$
Then,
\begin{equation}
\Delta_{k+1}
\leq
\rho\Delta_k
+
\frac{5\alpha}{4}\mathcal{E}_k,
\qquad
\rho
\triangleq
1-\alpha\mu_{\rm PL}
<1.
\label{eq:PL-recursion}
\end{equation}
Consequently,
\begin{equation}
\Delta_K
\leq
\rho^K\Delta_0
+
\frac{5\alpha}{4}
\sum_{k=0}^{K-1}
\rho^{K-1-k}\mathcal{E}_k.
\label{eq:PL-unrolled}
\end{equation}
If $\mathcal{E}_k\leq\bar{\mathcal{E}}$, then
$\limsup_{K\rightarrow\infty}
\Delta_K \leq \frac{5\bar{\mathcal{E}}}{4\mu_{\rm PL}}.$
If $\mathcal{E}_k\rightarrow0$, exact convergence follows.
If $\mathcal{E}_k=\mathcal{O}(\gamma^k)$ for some $0<\gamma<1$ with $\gamma\neq\rho$, the convergence is linear with rate $\max\{\rho,\gamma\}$.
For $\gamma=\rho$,
$\Delta_K=\mathcal{O}(K\rho^K)$.
\end{corollary}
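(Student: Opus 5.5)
The plan is a descent-lemma argument on the exact inexact-gradient recursion \eqref{eq:inexact-GD}, rather than the distance-to-optimum argument of Theorem~\ref{thm:convex}. The PL condition \eqref{eq:PL} then turns the guaranteed decrease into a contraction of the optimality gap. Write $\mathbf{x}^k=\bar{\mathbf{x}}_{\pi}^k$, $\mathbf{g}^k=\nabla F_{\pi}(\mathbf{x}^k)$, and $\mathbf{x}^{k+1}=\mathbf{x}^k-\alpha(\mathbf{g}^k+\boldsymbol{\xi}^k)$.

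\textbf{Step 1 (one-step descent).} Since $F_{\pi}$ is a convex combination of $L$-smooth functions, it is $L$-smooth (Assumption~\ref{ass:objective}). Hence
\begin{equation*}
F_{\pi}(\mathbf{x}^{k+1})\leq F_{\pi}(\mathbf{x}^k)-\alpha\langle\mathbf{g}^k,\mathbf{g}^k+\boldsymbol{\xi}^k\rangle+\tfrac{L\alpha^2}{2}\|\mathbf{g}^k+\boldsymbol{\xi}^k\|^2 .
\end{equation*}
Rather than a crude Young bound, I will use the polarization identity
\begin{equation*}
-\langle\mathbf{g},\mathbf{g}+\boldsymbol{\xi}\rangle=-\tfrac12\|\mathbf{g}\|^2-\tfrac12\|\mathbf{g}+\boldsymbol{\xi}\|^2+\tfrac12\|\boldsymbol{\xi}\|^2 .
\end{equation*}
With $\alpha\leq 1/(4L)$ we have $\tfrac{L\alpha^2}{2}\leq\tfrac{\alpha}{8}<\tfrac{\alpha}{2}$, so the $\|\mathbf{g}^k+\boldsymbol{\xi}^k\|^2$ terms combine to a nonpositive quantity and can be dropped. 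This leaves
\begin{equation*}
F_{\pi}(\mathbf{x}^{k+1})\leq F_{\pi}(\mathbf{x}^k)-\tfrac{\alpha}{2}\|\mathbf{g}^k\|^2+\tfrac{\alpha}{2}\|\boldsymbol{\xi}^k\|^2 .
\end{equation*}

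\textbf{Step 2 (PL contraction).} Apply \eqref{eq:PL} pointwise, which is valid on the visited region, to get $-\tfrac{\alpha}{2}\|\mathbf{g}^k\|^2\leq-\alpha\mu_{\rm PL}[F_{\pi}(\mathbf{x}^k)-F_{\pi}^{\star}]$. Subtract $F_{\pi}^{\star}$ and take total expectation, using $\mathcal{E}_k=\mathbb{E}\|\boldsymbol{\xi}^k\|^2$. This yields $\Delta_{k+1}\leq(1-\alpha\mu_{\rm PL})\Delta_k+\tfrac{\alpha}{2}\mathcal{E}_k$, which implies \eqref{eq:PL-recursion} because $\tfrac12\leq\tfrac54$. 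The extra slack could instead absorb a cruder Young-type bound on the cross term.

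For $\rho\in[0,1)$: $\mu_{\rm PL}>0$ gives $\rho<1$. Combining PL with smoothness ($\|\nabla F_{\pi}\|^2\leq 2L(F_{\pi}-F_{\pi}^{\star})$) gives $\mu_{\rm PL}\leq L$. Hence $\alpha\mu_{\rm PL}\leq 1/4$ and $\rho\geq 3/4>0$, so unrolling preserves inequalities.

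\textbf{Step 3 (unrolling and regimes).} Induction on \eqref{eq:PL-recursion} gives \eqref{eq:PL-unrolled}. Each regime then follows from the geometric sum.
\begin{itemize}
\item \emph{Bounded error.} If $\mathcal{E}_k\leq\bar{\mathcal{E}}$, then $\sum_{j\geq0}\rho^j=1/(\alpha\mu_{\rm PL})$, and $\rho^K\Delta_0\to0$, so $\limsup_K\Delta_K\leq 5\bar{\mathcal{E}}/(4\mu_{\rm PL})$.
\item \emph{Vanishing error.} If $\mathcal{E}_k\to0$, use a Toeplitz argument. Fix $\epsilon>0$ and choose $k_0$ with $\mathcal{E}_k\leq\epsilon$ for $k\geq k_0$. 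The first $k_0$ terms of the sum decay like $\rho^{K-k_0}$, and the tail is at most $5\epsilon/(4\mu_{\rm PL})$. Letting $\epsilon\to0$ gives $\Delta_K\to0$.
\item \emph{Geometric error, $\gamma\neq\rho$.} If $\mathcal{E}_k\leq c\gamma^k$, then $\sum_{k=0}^{K-1}\rho^{K-1-k}\gamma^k=(\rho^K-\gamma^K)/(\rho-\gamma)$. This is $\mathcal{O}(\max\{\rho,\gamma\}^K)$ with constant $1/|\rho-\gamma|$.
\item \emph{Geometric error, $\gamma=\rho$.} The sum equals $K\rho^{K-1}$, giving $\Delta_K=\mathcal{O}(K\rho^K)$.
\end{itemize}

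The main obstacle is Step~1. One must handle the cross term $\langle\mathbf{g}^k,\boldsymbol{\xi}^k\rangle$ without giving up half of the $\|\mathbf{g}^k\|^2$ decrease, since a naive Young split would only give $\rho=1-\alpha\mu_{\rm PL}/2$. The polarization identity together with $\alpha\leq1/(4L)$ is what yields the full contraction factor $1-\alpha\mu_{\rm PL}$.

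A smaller point is that expectations are taken only after the pointwise inequality. This matters because $\boldsymbol{\xi}^k$ need not have zero mean, so no unbiasedness is used anywhere in the argument.
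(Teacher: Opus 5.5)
Your proof is correct and follows the paper's route: the $L$-smooth descent inequality on the inexact-gradient recursion \eqref{eq:inexact-GD}, then \eqref{eq:PL} applied pointwise, then expectation and unrolling. Your check that $\rho\geq 3/4$ (which unrolling and the $\mathcal{O}(K\rho^K)$ case tacitly need) and your regime-by-regime treatment usefully fill in what the paper compresses into ``unrolling gives the remaining claims.'' The only difference is the cross term, and here your closing remark is off. The paper uses the asymmetric split $-\langle\mathbf{g}^k,\boldsymbol{\xi}^k\rangle\leq\frac14\|\mathbf{g}^k\|^2+\|\boldsymbol{\xi}^k\|^2$ together with $\|\mathbf{g}^k+\boldsymbol{\xi}^k\|^2\leq2\|\mathbf{g}^k\|^2+2\|\boldsymbol{\xi}^k\|^2$, and with $\alpha\leq1/(4L)$ this already leaves the full $-\frac{\alpha}{2}\|\mathbf{g}^k\|^2$, hence $\rho=1-\alpha\mu_{\rm PL}$, at the cost of the constant $\frac{5\alpha}{4}$. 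So polarization is not needed for the contraction factor; it only sharpens the error coefficient to $\frac{\alpha}{2}$ and relaxes the stepsize requirement to $\alpha\leq1/L$.
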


\begin{proof}
Let
$\mathbf{x}^k=\bar{\mathbf{x}}_{\pi}^k$
and
$\mathbf{g}^k=\nabla F_{\pi}(\mathbf{x}^k)$.
By $L$-smoothness and \eqref{eq:inexact-GD},
\begin{equation}
F_{\pi}(\mathbf{x}^{k+1})
\leq
F_{\pi}(\mathbf{x}^k)
-\alpha\left\langle \mathbf{g}^k,\mathbf{g}^k+\boldsymbol{\xi}^k \right\rangle
+ \frac{L\alpha^2}{2}\left\|\mathbf{g}^k+\boldsymbol{\xi}^k \right\|^2.
\end{equation}
Using
$-\left\langle
\mathbf{g}^k,
\boldsymbol{\xi}^k \right\rangle \leq \frac{1}{4} \|\mathbf{g}^k\|^2 + \|\boldsymbol{\xi}^k\|^2$
and
$\left\| \mathbf{g}^k + \boldsymbol{\xi}^k \right\|^2 \leq 2\|\mathbf{g}^k\|^2 + 2\|\boldsymbol{\xi}^k\|^2,$
the condition $\alpha\leq1/(4L)$ gives
\begin{equation}
F_{\pi}(\mathbf{x}^{k+1})
\leq
F_{\pi}(\mathbf{x}^k)
-
\frac{\alpha}{2}
\|\mathbf{g}^k\|^2
+
\frac{5\alpha}{4}
\|\boldsymbol{\xi}^k\|^2.
\end{equation}
Applying \eqref{eq:PL} and taking expectations yields
\eqref{eq:PL-recursion}.
Unrolling the recursion gives the remaining claims.
\end{proof}

\begin{remark}
Theorem~\ref{thm:convex} applies directly to smooth convex objectives, including the logistic regression setting used in our experiments.
The PL condition is only an additional sufficient condition for the stronger linear-rate result and is not required by the general convex analysis.
Model-stream compression enters through $\mathcal{C}_x^k$, while tracking-stream compression is captured by $\mathcal{T}^k$.
\end{remark}



\section{Experiment Results and Discussion}

We simulate a decentralized wireless network with $N=15$ nodes deployed in a
$2000\times2000~\mathrm{m}^2$ area via Poisson-disk sampling with
$d_{\min}=250~\mathrm{m}$. Directed links are formed within
$r_{\max}=750~\mathrm{m}$. Channel parameters follow \cite{nguyen2019computation},
with default transmit power $P_i=0.2~\mathrm{W}$. 
To evaluate the proposed method in the smooth convex setting of
Theorem~1, we use a logistic regression model with Top-$K$ biased compression.

\begin{figure}[!ht]
    \centering
    \includegraphics[width=0.98\linewidth]{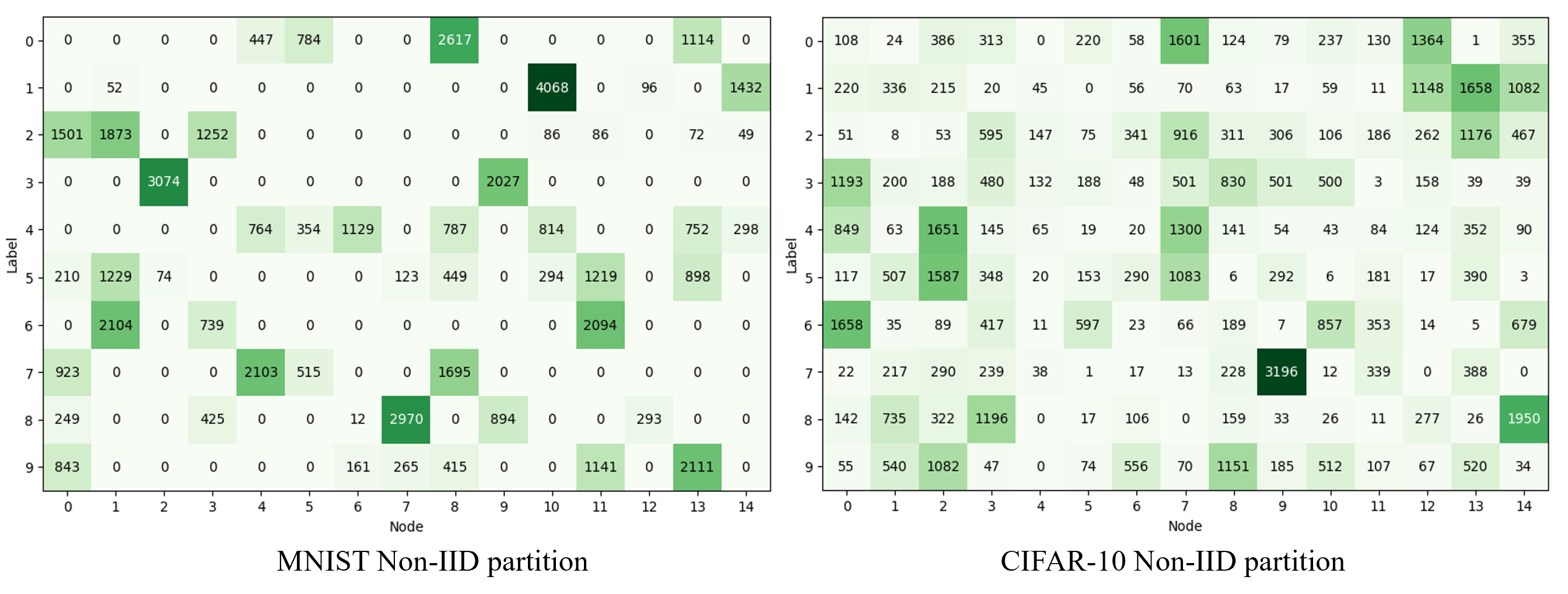}
    \captionsetup{font=footnotesize}
    \caption{Non-IID partition of MNIST and CIFAR10 dataset}
        \vspace{-0.1cm}
    \label{fig:split}
\end{figure}

\begin{figure*}[!ht]
    \centering
    \begin{subfigure}{0.31\textwidth}
        \centering
        \includegraphics[width=\linewidth]{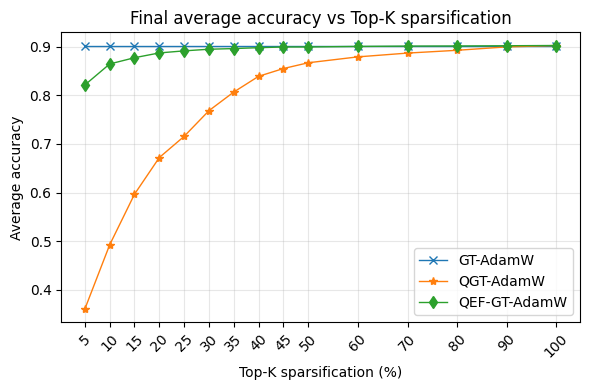}
    \end{subfigure}
    \begin{subfigure}{0.31\textwidth}
        \centering
        \includegraphics[width=\linewidth]{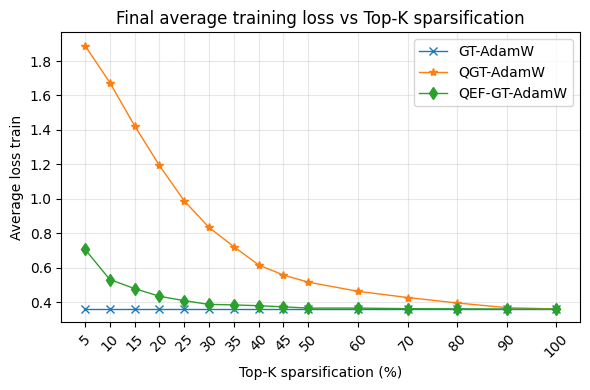}
    \end{subfigure}
    \begin{subfigure}{0.34\textwidth}
        \centering
        \includegraphics[width=\linewidth]{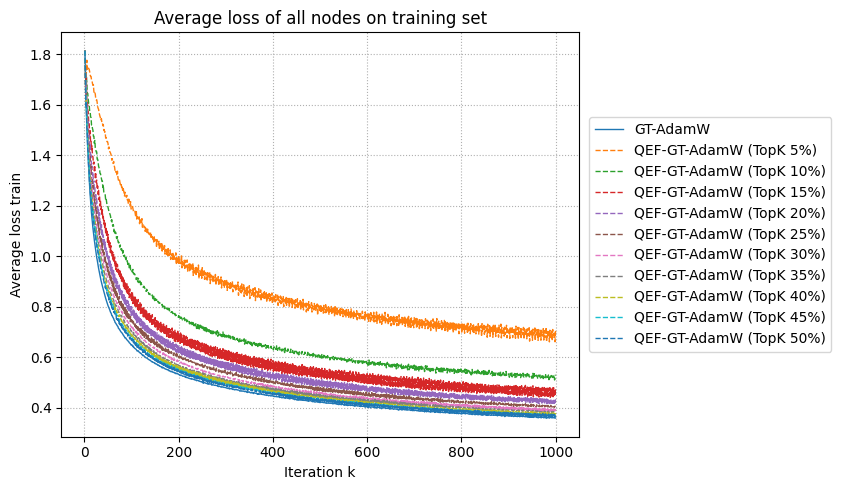}
    \end{subfigure}
    \hfill
    \begin{subfigure}{0.31\textwidth}
        \centering
        \includegraphics[width=\linewidth]{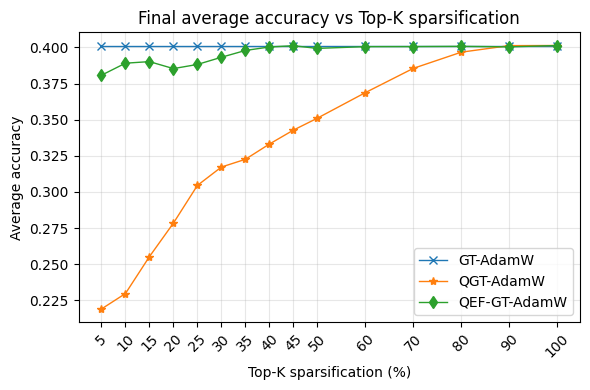}
    \end{subfigure}
    \begin{subfigure}{0.31\textwidth}
        \centering
        \includegraphics[width=\linewidth]{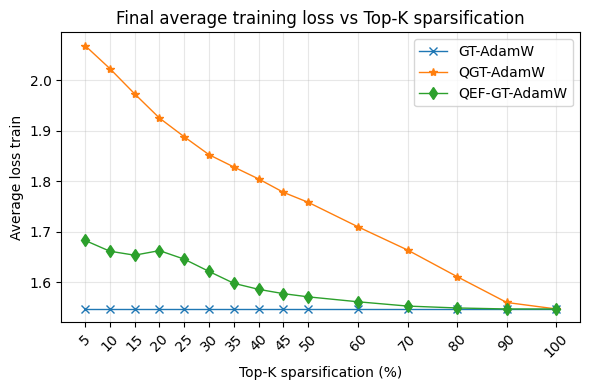}
    \end{subfigure}
    \begin{subfigure}{0.34\textwidth}
        \centering
        \includegraphics[width=\linewidth]{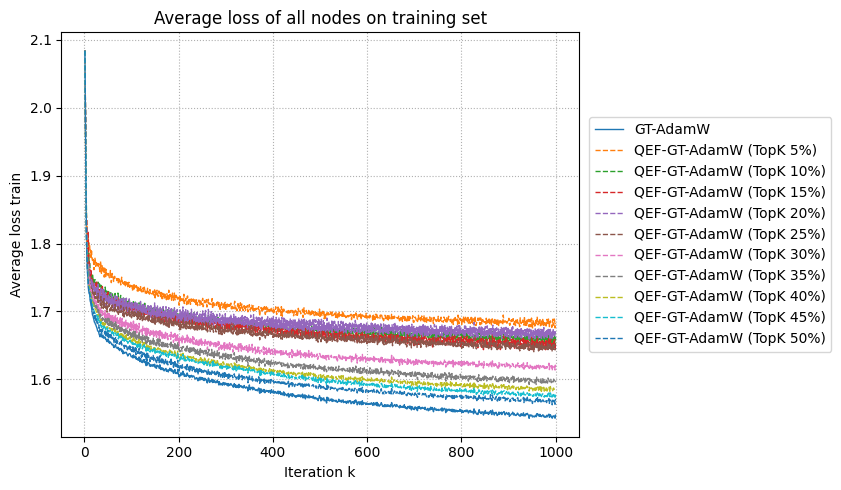}
    \end{subfigure}
    \captionsetup{font=footnotesize}
    \caption{Effect of Top-$K$ density on test accuracy and training loss, with convergence curves for MNIST (first row) and CIFAR-10 (second row).}
    \label{fig:topk_acc}
\end{figure*}

\begin{figure*}[!ht]
    \centering
    \begin{subfigure}{0.3\textwidth}
        \centering
        \includegraphics[width=\linewidth]{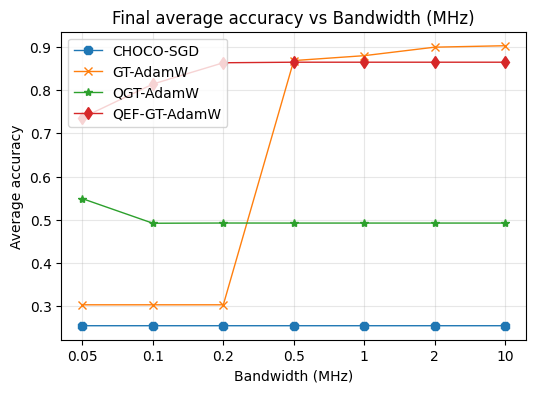}
        \captionsetup{font=footnotesize}
        \caption{Mean accuracy (MNIST)}
    \end{subfigure}
    \begin{subfigure}{0.31\textwidth}
        \centering
        \includegraphics[width=\linewidth]{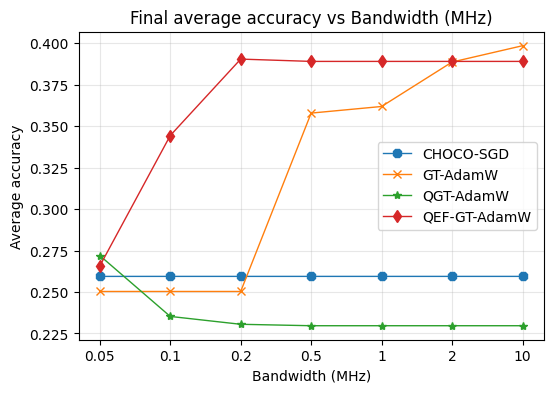}
        \captionsetup{font=footnotesize}
       \caption{Mean accuracy (CIFAR-10)}
        \label{fig:bw_cifar10_acc}
    \end{subfigure}
    \begin{subfigure}{0.31\textwidth}
        \centering
        \includegraphics[width=\linewidth]{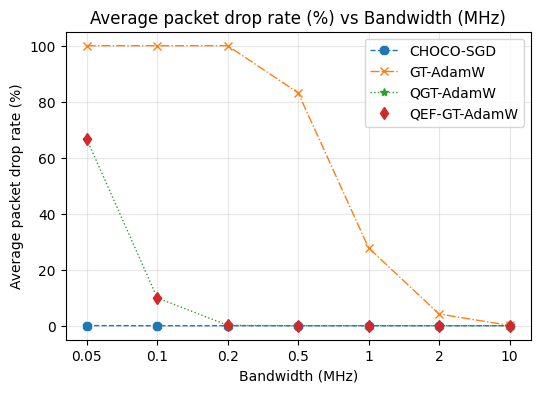}
        \captionsetup{font=footnotesize}
        \caption{Average packet drop rate versus bandwidth}
        \label{fig:bw_cifar10_loss}
    \end{subfigure}
    \captionsetup{font=footnotesize}
    \caption{Final average accuracy and average packet drop rate versus bandwidth (MHz) under wireless deadlines and outage-based drops}
    \label{fig:bw_tune}
\end{figure*}

\begin{figure*}[!ht]
    \centering
    \begin{subfigure}{0.3\textwidth}
        \centering
        \includegraphics[width=\linewidth]{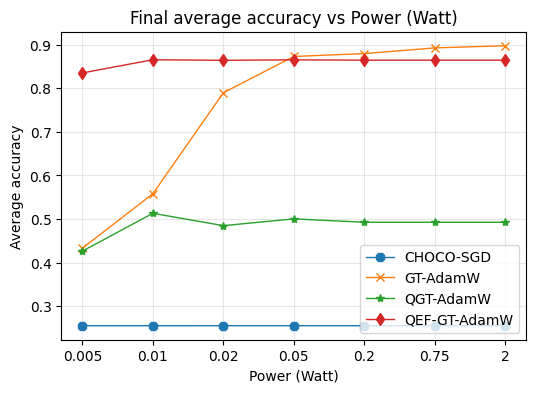}
        \captionsetup{font=footnotesize}
        \caption{Mean accuracy (MNIST)}
    \end{subfigure}
    \begin{subfigure}{0.31\textwidth}
        \centering
        \includegraphics[width=\linewidth]{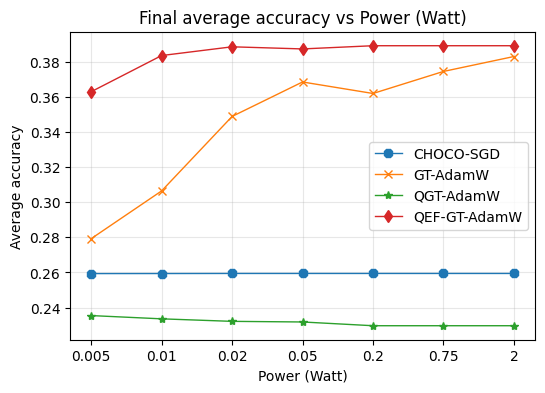}
        \captionsetup{font=footnotesize}
       \caption{Mean accuracy (CIFAR-10)}
        \label{fig:pw_cifar10_acc}
    \end{subfigure}
    \begin{subfigure}{0.31\textwidth}
        \centering
        \includegraphics[width=\linewidth]{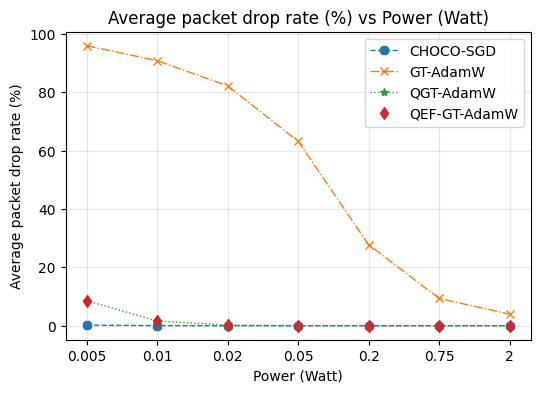}
        \captionsetup{font=footnotesize}
        \caption{Average packet drop rate versus power}
        \label{fig:pw_cifar10_loss}
    \end{subfigure}
    \captionsetup{font=footnotesize}
    \caption{Final average accuracy and average packet drop rate versus transmission power (Watt) under wireless deadlines and outage-based drops.}
    \label{fig:power_tune}
\end{figure*}

\subsection{Datasets and Model}
\label{subsec:datasets_tasks}
We evaluate MNIST~\cite{lecun2002gradient} and CIFAR-10~\cite{krizhevsky2009learning}
using 50,000 training samples and the standard test sets over 1000 epochs.
Non-IID data are split across $N$ local datasets $\{\mathcal{D}_i\}_{i=1}^{N}$,
as shown in Fig.~\ref{fig:split}: MNIST uses a pathological label-skew split
with 2--5 labels per node~\cite{huang2021personalized}, while CIFAR-10 uses
Dirichlet partitioning with concentration $\alpha=0.5$~\cite{gao2022feddc}.

For comparison purposes, we also consider three representative DecL baselines: (1) CHOCO-SGD \cite{koloskova2019decentralized}, a compressed decentralized SGD method with quantized communication; (2) GT-AdamW, which integrates classical GT with AdamW; and (3) QGT-AdamW, a quantized GT with AdamW.
Training hyperparameters are as follows: for AdamW-based methods, the learning rate $\eta$ is set to $0.005$ (MNIST) and $0.001$ (CIFAR-10) with $\beta_m=0.9$ and $\beta_v=0.999$; for CHOCO-SGD, $\eta=0.005$ (MNIST), $\eta=0.001$ (CIFAR-10) and consensus step-size $\gamma = 0.001$ for both datasets. All methods use a weight decay of $0.01$. 
Communication compression uses Top-$K$ sparsification retaining
10\% of the coordinates, achieving 90\% message reduction.
Performance is evaluated using three
metrics: (1) test accuracy, (2) global training loss, and
(3) packet drop rate.

\subsection{Results and Discussion}

Fig.~\ref{fig:topk_acc} illustrates the impact of quantization and EF  on model performance. In this experiment, wireless impairments are disabled, and perfect communication is assumed at every iteration to isolate the effect of \emph{quantization only}. 
Under this controlled setting, GT-AdamW serves as a full-precision reference because it does not incur quantization error.
The results show that \texttt{QEF-GT-AdamW} consistently outperforms the baseline QGT-AdamW across all Top-$K$ sparsifications. Moreover, \texttt{QEF-GT-AdamW} approaches the full-precision reference accuracy at Top-$K$ sparsifications of approximately $30\%$ for MNIST and $35\%$ for CIFAR-10. In contrast, the baseline QGT-AdamW requires a substantially larger Top-$K$ sparsification (around $90\%$) to achieve comparable accuracy.
We further visualize the average training loss across all nodes for different Top-$K$ sparsifications. For both datasets, \texttt{QEF-GT-AdamW} yields uniformly lower loss than QGT-AdamW at the same Top-$K$ sparsification, confirming that EF effectively compensates quantization bias and improves optimization quality. 

Figs.~\ref{fig:bw_tune} and~\ref{fig:power_tune} further explain the accuracy trends by reporting the \emph{average packet drop rate during training} under bandwidth-limited and power-limited WCom, respectively, with the Top-$K$ sparsification fixed at $10\%$. As can be seen, \textsc{CHOCO-SGD} consistently exhibits an almost zero packet drop rate over all settings because it exchanges only the model variable, resulting in a much smaller packet size. However, despite this highly reliable communication behavior, its learning accuracy remains significantly lower, since it does not explicitly mitigate client drift under non-IID data.
By contrast, GT-based methods must exchange both the model and the gradient-tracking variables, which substantially enlarges the payload and makes them much more vulnerable to deadline violations and wireless outages. This effect is most severe for \texttt{GT-AdamW}, which does not use quantization. As shown in Fig.~\ref{fig:bw_tune}, its average packet drop rate is approximately $100\%$ at $0.05$, $0.1$, and $0.2$ MHz, remains very high at about $83\%$ even at $0.5$ MHz, and is still around $28\%$ at $1$ MHz. Only when the bandwidth increases to $2$ MHz and above does the drop rate become small (about $4\%$ at $2$ MHz and nearly $0\%$ at $10$ MHz). A similar trend is observed in Fig.~\ref{fig:power_tune}: \texttt{GT-AdamW} experiences about $96\%$, $91\%$, $82\%$, and $63\%$ packet drops at $0.005$, $0.01$, $0.02$, and $0.05$ W, respectively, and its drop rate decreases gradually only at higher power levels (about $28\%$ at $0.2$ W, $9\%$ at $0.75$ W, and $4\%$ at $2$ W). These statistics explain the strong performance degradation of unquantized GT-AdamW in bandwidth- or power-limited regimes.

In contrast, the quantized GT variants, \texttt{QGT-AdamW} and \texttt{QEF-GT-AdamW}, achieve dramatically lower packet drop rates because both the model and tracking streams are compressed before transmission. In Fig.~\ref{fig:bw_tune}, their drop rate is about $67\%$ at $0.05$ MHz, decreases to roughly $10\%$ at $0.1$ MHz, and becomes essentially zero from $0.2$ MHz onward. In Fig.~\ref{fig:power_tune}, their drop rate is only about $8\%$ at $0.005$ W, around $1\%$ at $0.01$ W, and nearly zero for $0.02$ W and above. Since \texttt{QGT-AdamW} and \texttt{QEF-GT-AdamW} use the same Top-$K$ sparsification, they exhibit nearly identical drop-rate curves, which is expected because their transmitted packet sizes are almost the same. Therefore, the accuracy gain of \texttt{QEF-GT-AdamW} over \texttt{QGT-AdamW} does not come from a lower transmission failure probability, but from the optimization benefit of EF in compensating compression distortion across rounds. Overall, these results confirm that \texttt{QEF-GT-AdamW} achieves a more favorable trade-off among communication efficiency, robustness to wireless impairments, and learning effectiveness under heterogeneous data.


\section{Conclusion}


In this paper, we propose \texttt{QEF-GT-AdamW}, a resilient and communication-efficient DecL algorithm for unreliable wireless edge networks. It combines AdamW, gradient tracking, and dual biased quantization-based EF to minimize payloads and stabilize optimization against non-IID data. Experiments on MNIST and CIFAR-10 show that QEF-GT-AdamW outperforms representative baselines in convergence speed and robustness, particularly in resource-limited environments.

\ifCLASSOPTIONcaptionsoff
  \newpage
\fi

\section*{Acknowledgment}
\small{
This work was supported by the Luxembourg National Research Fund (FNR) through the CHIST-ERA SHIELD project (Grant INTER/CHIST24/19023763/SHIELD) and the FNR AFR program via the FULFIL project (Grant 2000141).
}

\bibliographystyle{IEEEtran}
\bibliography{IEEEabrv,refs_globecom.bib} 

%

\begin{IEEEbiography}{Michael Shell}
Biography text here.
\end{IEEEbiography}

\begin{IEEEbiographynophoto}{John Doe}
Biography text here.
\end{IEEEbiographynophoto}


\begin{IEEEbiographynophoto}{Jane Doe}
Biography text here.
\end{IEEEbiographynophoto}



\end{document}